\documentclass{article}
\usepackage{iclr2025/iclr2025_conference}
%%%%% NEW MATH DEFINITIONS %%%%%

% Additional Packages
\usepackage{amsmath,amsfonts,amsthm,bm,bbm}
\usepackage[ruled]{algorithm2e}
\usepackage{xcolor,colortbl}
\definecolor{refcolor}{rgb}{0.2549, 0.4117, 0.8823}
\usepackage{times,url,setspace}
\usepackage[allcolors=refcolor,colorlinks=true]{hyperref}
\usepackage{cleveref}
\usepackage{multirow,multicol,booktabs}
\usepackage{tikz}
\usetikzlibrary{matrix,arrows,chains,positioning,shapes,calc}
\usepackage[many]{tcolorbox}
\usepackage{hwemoji}
\usepackage{graphicx,wrapfig}
\usepackage{subfig}

% Additional commands
\newcommand*\tageq{\refstepcounter{equation}\tag{\theequation}}

\DeclareMathOperator{\reason}{\mathtt{Reason}}
\DeclareMathOperator{\llm}{\pi_\theta}

\theoremstyle{plain}
\newtheorem{proposition}{Proposition}

\theoremstyle{remark}

\newtheorem{assumption}{Assumption}

% Algorithm2e typesetting
\SetStartEndCondition{ }{}{}
\SetKwProg{Fn}{def}{\string:}{}
\SetKw{Return}{return}
\SetKwFunction{Range}{$\mathtt{range}$}
\SetKwFunction{Generate}{$\mathtt{generate}$}
\SetKwFunction{Truncate}{$\mathtt{truncate}$}
\SetKw{KwTo}{in}
\SetKwFor{For}{for}{\string:}{}
\SetKwIF{If}{ElseIf}{Else}{if}{:}{elif}{else:}{}
\SetKwFor{While}{while}{:}{fintq}
\AlgoDontDisplayBlockMarkers
\SetAlgoNoEnd

% Tikz typesetting
\tikzset{>=latex} % for LaTeX arrow head
\colorlet{myred}{red!80!black}
\colorlet{myblue}{blue!80!black}
\colorlet{mygreen}{green!60!black}
\colorlet{mydarkred}{myred!40!black}
\colorlet{mydarkblue}{myblue!40!black}
\colorlet{mydarkgreen}{mygreen!40!black}
\tikzstyle{node}=[very thick,circle,draw=myblue,minimum size=22,inner sep=0.5,outer sep=0.6]
\tikzstyle{connect}=[->,thick,mydarkblue,shorten >=1]
\tikzset{ % node styles, numbered for easy mapping with \nstyle
  node 1/.style={node,mydarkgreen,draw=mygreen,fill=mygreen!25},
  node 2/.style={node,mydarkblue,draw=myblue,fill=myblue!20},
  node 3/.style={node,mydarkred,draw=myred,fill=myred!20},
}
\tikzset{
    desicion/.style={
        diamond,
        draw,
        text width=3em,
        text badly centered,
        inner sep=0pt
    },
    block/.style={
        rectangle,
        draw,
        text width=10em,
        text centered,
        rounded corners
    },
    cloud/.style={
        draw,
        ellipse,
        minimum height=2em
    },
    descr/.style={
        fill=white,
        inner sep=2.5pt
    },
    connector/.style={
        -latex,
        font=\scriptsize
    },
    rectangle connector/.style={
        connector,
        to path={(\tikztostart) -- ++(#1,0pt) \tikztonodes |- (\tikztotarget) },
        pos=0.5
    },
    rectangle connector/.default=-2cm,
    straight connector/.style={
        connector,
        to path=--(\tikztotarget) \tikztonodes
    }
}

% Shapes
\definecolor{main}{HTML}{5989cf}    % setting main color to be used
\definecolor{sub}{HTML}{cde4ff}     % setting sub color to be used
\newtcolorbox{roundedbox}{
    fontupper = \color{black}, % font color
    boxrule = 1.5pt,
    colframe = main,
    rounded corners,
    arc = 5pt   % corners roundness
}

\newtcolorbox{textbox}[2][]{%
    sharpish corners, % better drop shadow
    boxrule = 0pt,
    toprule = 4.5pt, % top rule weight
    enhanced,
    fuzzy shadow = {0pt}{-2pt}{-0.5pt}{0.5pt}{black!35}, % {xshift}{yshift}{offset}{step}{options}
    title={#2}, % Adds a title with box number and given title
    #1 % Allows for additional optional arguments
}

\definecolor{box}{HTML}{cce4ef}
\newtcolorbox{bluetextbox}{
    % sharpish corners, % better drop shadow
    colback = box, 
    colframe = main, 
    boxrule = 0pt, 
    % boxsep= 3pt,
    % fontlower=\linespread{1.5}\selectfont
    before=\setstretch{1.5}\selectfont,
    % parbox=false,
    % toprule = 4.5pt, % top rule weight
    enhanced,
    rounded corners,
    fuzzy shadow = {0pt}{-2pt}{-0.5pt}{0.5pt}{black!35} % {xshift}{yshift}{offset}{step}{options} 
}

% Mark sections of captions for referring to divisions of figures

% Highlight a newly defined term

% Figure reference, lower-case.

% Figure reference, capital. For start of sentence

% Section reference, lower-case.

% Section reference, capital.

% Reference to two sections.

% Reference to three sections.

% Reference to an equation, lower-case.
\def\eqref#1{equation~\ref{#1}}
% Reference to an equation, upper case

% A raw reference to an equation---avoid using if possible

% Reference to a chapter, lower-case.

% Reference to an equation, upper case.

% Reference to a range of chapters

% Reference to an algorithm, lower-case.

% Reference to an algorithm, upper case.

% Reference to a part, lower case

% Reference to a part, upper case

\def\1{\bm{1}}
\newcommand{\train}{\mathcal{D}}

% Random variables

% rm is already a command, just don't name any random variables m

% Random vectors

% Elements of random vectors

% Random matrices

% Elements of random matrices

% Vectors

% Elements of vectors

% Matrix

% Tensor
\DeclareMathAlphabet{\mathsfit}{\encodingdefault}{\sfdefault}{m}{sl}
\SetMathAlphabet{\mathsfit}{bold}{\encodingdefault}{\sfdefault}{bx}{n}

% Graph

% Sets

% Don't use a set called E, because this would be the same as our symbol
% for expectation.

% Entries of a matrix

% entries of a tensor
% Same font as tensor, without \bm wrapper

% The true underlying data generating distribution

% The empirical distribution defined by the training set

% The model distribution

% Stochastic autoencoder distributions

 % Laplace distribution

\newcommand{\E}{\mathbb{E}}
\newcommand{\prob}{\mathbb{P}}
\newcommand{\Ls}{\mathcal{L}}

\newcommand{\KL}{D_{\mathrm{KL}}}

% Wolfram Mathworld says $L^2$ is for function spaces and $\ell^2$ is for vectors
% But then they seem to use $L^2$ for vectors throughout the site, and so does
% wikipedia.

 % See usage in notation.tex. Chosen to match Daphne's book.

\newcommand{\px}{\pmb{x}}
\newcommand{\py}{\pmb{y}}
\newcommand{\pz}{\pmb{z}}

\usepackage[utf8]{inputenc} % allow utf-8 input
\usepackage[T1]{fontenc}    % use 8-bit T1 fonts
\usepackage{url}            % simple URL typesetting
\usepackage{booktabs}       % professional-quality tables
\usepackage{amsfonts}       % blackboard math symbols
\usepackage{nicefrac}       % compact symbols for 1/2, etc.
\usepackage{microtype}      % microtypography
\usepackage{colortbl}
\usepackage{amsthm}
\usepackage{thmtools}
\usepackage{bbm}
\usepackage{mathtools}
\usepackage{multirow}
\usepackage{hhline}
% \usepackage[table,xcdraw]{xcolor}
% \usepackage{xcolor,colortbl}
% \definecolor{refcolor}{rgb}{0.2549, 0.4117, 0.8823}
% \usepackage[allcolors=refcolor,colorlinks=true]{hyperref}

\usepackage{caption}
\usepackage{subcaption}
\usepackage{wrapfig}
\usepackage{enumitem,kantlipsum}
\usepackage{tabularx}
\usepackage{amssymb}
\usepackage[textsize=meduim]{todonotes}

\iclrfinalcopy % uncomment this line for final submission

\newcommand{\methodfull}{\textbf{\underline{LaT}}ent \textbf{\underline{R}}easoning \textbf{\underline{O}}ptimization (\textbf{LaTRO})}

% \title{Unlocking Latent Reasoning Capabilities in Language Models via Self-Rewarding}
\title{Language Models are Hidden Reasoners:\\Unlocking Latent Reasoning Capabilities via Self-Rewarding}
\author{Haolin Chen, Yihao Feng, Zuxin Liu, Weiran Yao, Akshara Prabhakar, \\
\textbf{Shelby Heinecke, Ricky Ho, Phil Mui, Silvio Savarese, Caiming Xiong, Huan Wang\thanks{Corresponding author, \texttt{huan.wang@salesforce.com}.}} \\
Salesforce AI Research \\
}
\date{July 2024}

\begin{document}

\maketitle
\begin{abstract}
    Large language models (LLMs) have shown impressive capabilities, but still struggle with complex reasoning tasks requiring multiple steps. While prompt-based methods like Chain-of-Thought (CoT) can improve LLM reasoning at inference time, optimizing reasoning capabilities during training remains challenging. We introduce \methodfull, a principled framework that formulates reasoning as sampling from a latent distribution and optimizes it via variational approaches. LaTRO enables LLMs to concurrently improve both their reasoning process and ability to evaluate reasoning quality, without requiring external feedback or reward models. We validate LaTRO through experiments on GSM8K and ARC-Challenge datasets using multiple model architectures. On GSM8K, LaTRO improves zero-shot accuracy by an average of 12.5\% over base models and 9.6\% over supervised fine-tuning across Phi-3.5-mini, Mistral-7B, and Llama-3.1-8B. Our findings suggest that pre-trained LLMs possess latent reasoning capabilities that can be unlocked and enhanced through our proposed optimization approach in a self-improvement manner. The code of LaTRO is available at \url{https://github.com/SalesforceAIResearch/LaTRO}.
\end{abstract}

\section{Introduction}

The development of large language models (LLMs) with enhanced reasoning capabilities has emerged as a crucial area of research. Despite their impressive advances, the inherent next-token prediction mechanism of LLMs makes it challenging for these models to solve complex problems requiring multiple reasoning steps \citep{wang2022self, huang2023large}. For instance, LLMs often struggle to directly provide accurate solutions to mathematical problems or even simple puzzles like counting specific letters in a word.
Consequently, researchers have explored various prompting strategies that guide LLMs to generate reasoning trajectories or rationales—sequences of tokens that build a step-by-step progression toward an answer. Techniques such as Chain-of-Thought (CoT) \citep{DBLP:conf/nips/Wei0SBIXCLZ22}, Tree-of-Thought (ToT) \citep{yao2024tree}, and Program-of-Thought (PoT) \citep{DBLP:journals/tmlr/ChenM0C23} prompting methods exemplify this approach.

\begin{figure}[htbp]
    \begin{tikzpicture}[scale=1]
        \node[anchor=north west] (query) at (0,0) {
            \begin{tcolorbox}[colback=white, colframe=gray, width=2.5cm, arc=1mm, auto outer arc, fontupper=\color{black}\scriptsize, boxsep=0.5mm, left=0.5mm, right=0.5mm, top=0.5mm, bottom=0.5mm]
                \centering
                \textbf{Question} $\px$
            \end{tcolorbox}
        };
        \node[anchor=west] (model) at ([shift={(0.3, 0)}]query.east){
            \begin{tcolorbox}[colback=blue!5!white, colframe=main, width=1.5cm, height=1cm, arc=1mm, valign=center, auto outer arc, fontupper=\color{black}\scriptsize, boxsep=0.5mm, left=0.5mm, right=0.5mm, top=0.5mm, bottom=0.5mm]
                \centering
                Language Model $\llm$
            \end{tcolorbox}
        };
        \draw[->] (query.east)--(model.west);
        \node[anchor=west] (rationales) at ([shift={(0.3, 0)}]model.east) {
            \begin{tcolorbox}[colback=white, colframe=gray, width=2.5cm, arc=1mm, valign=top, auto outer arc, fontupper=\color{black}\scriptsize, boxsep=0.5mm, left=0.5mm, right=0.5mm, top=0.5mm, bottom=0.5mm]
                \centering
                \textbf{Sampled Rationale}
                
                $\pz_1, \pz_2, \ldots, \pz_K$
                
            \end{tcolorbox}
        };
        \draw[->] (model.east) --(rationales.west);
        \node[right=0.3cm of rationales.east, anchor=west] (rewards) {
            \begin{tcolorbox}[colback=white, colframe=gray, width=3cm, arc=1mm, valign=top, auto outer arc, fontupper=\color{black}\scriptsize, boxsep=0.5mm, left=0.5mm, right=0.5mm, top=0.5mm, bottom=0.5mm]
            \centering
            \textbf{Self-reward}: Compute the likelihood of $\llm$ generating $\py$ after observing $\px$ and $\pz$.
            \end{tcolorbox}
        };
        \node[below=0.3cm of rewards.south, anchor=north] (groundtruth){
            \begin{tcolorbox}[colback=white, colframe=gray, width=2.5cm, arc=1mm, valign=top, auto outer arc, fontupper=\color{black}\scriptsize, boxsep=0.5mm, left=0.5mm, right=0.5mm, top=0.5mm, bottom=0.5mm]
            \centering
            \textbf{Groundtruth $\py$}
            \end{tcolorbox}
        };

        \draw[->] (rationales.east) --  (rewards.west);
        \draw[->] (groundtruth.north) -- (rewards.south);
        \draw[->] (model.north) -- ([yshift=0.3cm]model.north) -- ([yshift=0.25cm]rewards.north) -- (rewards.north);
        \node[anchor=west] (model_2) at ([shift={(0.5, 0)}]rewards.east){
            \begin{tcolorbox}[colback=blue!5!white, colframe=main, width=1.5cm, height=1cm, arc=1mm, valign=center, auto outer arc, fontupper=\color{black}\scriptsize, boxsep=0.5mm, left=0.5mm, right=0.5mm, top=0.5mm, bottom=0.5mm]
                \centering
                Language Model $\llm$
            \end{tcolorbox}
        };
        \draw[->] (rewards.east) -- node[below]{\tiny update}(model_2.west);
        \node[anchor=north west] (example) at (0, -2.5) {
            \begin{tcolorbox}[colback=white, colframe=gray, width=\textwidth, arc=1mm, valign=top, auto outer arc, fontupper=\color{black}\footnotesize, boxsep=0.5mm, left=0.5mm, right=0.5mm, top=0.5mm, bottom=0.5mm]
                \textbf{Question}: A robe takes 2 bolts of blue fiber and half that much white fiber. How many bolts does it take?

                \textbf{Groundtruth}: The answer is 3.

                \textbf{Sampled Rationale 1 (correct ✅, higher likelihood)}: It takes 2/2 = 1 bolt of white fiber. 2 + 1 = 3. So, it takes a total of 3 bolts of fiber.

                \textbf{Sampled Rationale 2 (incorrect ❌, lower likelihood)}: We need 2 bolts of blue and 2 bolts of white fiber. In total, it is 2 + 2 = 4.

            \end{tcolorbox}
        };
    \end{tikzpicture}
    \caption{\small Overview of LaTRO with an example question from GSM8K \citep{cobbe2021training}. LaTRO treats reasoning trajectories as latent variables and optimizes the underlying distribution through self-rewarding. Given a question, the language model generates multiple reasoning rationales, evaluates their likelihood of producing the correct answer, and updates its parameters to favor high-quality rationales. This iterative process allows the model to improve both its ability to generate good reasoning paths and to evaluate the quality of those paths.}
    \label{fig:overview}
    \vspace{-0.2cm}
\end{figure}

Recent progress has also focused on inference-time techniques to enhance the reasoning abilities of LLMs \citep{wu2024empirical, brown2024large}, as observed in the OpenAI o1 model \citep{openai_learning_2024}. These methods have demonstrated remarkable performance in diverse reasoning tasks, including mathematics \citep{cobbe2021training, trinh2024solving, luo2024improve}, coding \citep{jimenez2023swe, guo2024deepseek, zhang2024diversity}, and scientific problem-solving \citep{rein2023gpqa}. Notable inference-time methods, such as CoT with Self-Consistency (CoT-SC) \citep{DBLP:conf/iclr/0002WSLCNCZ23} and CoT-Decoding \citep{wang2024chain}, extend the CoT approach by generating multiple reasoning paths and selecting the most consistent one. Additionally, techniques like ReAct \citep{DBLP:conf/iclr/YaoZYDSN023} and Reflexion \citep{DBLP:conf/nips/ShinnCGNY23} integrate reasoning into LLM agent loops, further enhancing their problem-solving capabilities.

Despite the promising results at inference time, improving the reasoning abilities of LLMs during their training phase remains a challenging problem. Several obstacles impede progress in this area. Firstly, there is a scarcity of high-quality reasoning data for complex problems, limiting the applicability of traditional supervised fine-tuning (SFT) approaches \citep{zelikman2022star}. Moreover, when such data is available, SFT on deterministic reasoning paths may result in a lack of diversity in problem-solving strategies, potentially causing over-confidence issues and performance degradation \citep{cobbe2021training}, especially in domains needing multiple valid approaches, such as mathematical proofs and coding.
Alternatively, improving reasoning through reinforcement learning from human feedback (RLHF) presents its own challenges \citep{havrilla2024teaching,luo2024improve}. Developing a reward model that accurately evaluates the quality and validity of reasoning paths is a formidable task, susceptible to distribution shifts and biased evaluations. 

Self-improvement approaches like STaR (Self-Taught Reasoner) \citep{zelikman2022star} and Quiet-STaR \citep{zelikman2024quiet} have shown promise in enhancing language models' reasoning capabilities without external feedback. 
However, STaR relies on task-specific few-shot examples to bootstrap its reasoning process, which can limit its generalizability across diverse tasks. While Quiet-STaR attempts to overcome this by inferring implicit rationales across arbitrary text, it does not directly optimize the reasoning process itself. Through these findings, we observe that \textit{pretrained LLMs already possess innate reasoning capabilities but just have not been fully activated or utilized}, inspiring us to propose our approach.

Our proposed method, \methodfull, addresses the limitations of previous approaches by formulating reasoning as sampling from a latent distribution and optimizing it through a principled variational framework. As illustrated in Fig. \ref{fig:overview}, LaTRO enables language models to concurrently improve both their reasoning process and ability to evaluate reasoning quality, without requiring task-specific few-shot examples or external reward models.
Key contributions of LaTRO include:
\begin{enumerate}[left=4pt]
\item A theoretical formulation connecting LLM reasoning optimization to latent variable models;
\item A self-rewarding mechanism leveraging the model's own probability estimates;
% \item The ability to compress long reasoning processes to shorter ones during training
\item Significant performance gains across multiple model architectures and reasoning tasks, demonstrating LaTRO's effectiveness in unlocking latent reasoning capabilities of language models.
\end{enumerate}
Our findings suggest that pre-trained LLMs are not only capable reasoners but also possess the potential to act as explicit reward models for evaluating reasoning paths. We term this approach of utilizing explicit reward functions induced by LLMs themselves as "self-rewarding." Empirically, LaTRO outperforms both baseline models and supervised fine-tuning approaches on reasoning tasks like GSM8K, while also demonstrating the capacity to compress reasoning processes and shift computational burdens from inference to training time.

\section{Related work}
\paragraph{Prompt-based LLM Reasoning}
Prompt-based reasoning methods prove to be effective across various domains, 
such as math problem-solving~\citep{polu2020generative,hendrycks2021measuring,DBLP:journals/corr/abs-2110-14168}, logical reasoning~\citep{sprague2024cot} and agentic tasks~\citep{DBLP:conf/iclr/YaoZYDSN023, DBLP:conf/nips/ShinnCGNY23,yao2023retroformer}. Chain-of-Thoughts or CoT~\citep{DBLP:conf/nips/Wei0SBIXCLZ22} is the pioneering work that prompts LLMs to decompose challenging tasks into smaller reasoning steps. After that, two primary research directions further improved reasoning capabilities during inference. One direction searched over the reasoning trajectories against a process-based verifier, or reward model~\citep{yao2024tree,DBLP:conf/aaai/BestaBKGPGGLNNH24,lightman2023let}. For example, tree-of-thoughts~\citep{yao2024tree} explored over thoughts by depth-first search (DFS), breadth-first search (BFS) or beam search. The other approach used a critic model to provide verbal feedback, iteratively refining the responses with that feedback~\citep{saunders2022self,DBLP:conf/nips/ShinnCGNY23,yao2023retroformer,NEURIPS2023_91edff07}. 

\paragraph{Self-Rewarding for LLM Reasoning}

Reasoning capabilities in LLMs can be enhanced in post-training through self-rewarding and reinforcement learning. The Self-Taught Reasoner, or STaR~\citep{zelikman2022star} introduced a bootstrapping technique that allows LLMs to generate rationales and fine-tune itself with self-generated reasoning paths. Quiet-STaR~\citep{zelikman2024quiet} extended this by training LLMs to infer implicit rationales across arbitrary text, enhancing both reasoning and predictive abilities without task-specific fine-tuning. Reinforced Fine-Tuning, or ReFT~\citep{trung2024reft} took this further by leveraging reinforcement learning to improve generalization in reasoning tasks like math problem-solving, enabling LLMs to learn from multiple reasoning paths. Self-correction capabilities in LLMs can also be reinforced through self-generated data~\citep{kumar2024training}. Lastly, \citet{hoffman2024training,hu2023amortizing} formulated the reasoning process as latent variable models, aligning LLMs towards more accurate reasoning with fewer annotated data.

\section{Background anMotivulation}
\label{sec:background}
We start by briefly introducing reasoning techniques (\textit{e.g.}, chain-of-thought \citep{DBLP:conf/nips/Wei0SBIXCLZ22}, ReAct \citep{DBLP:conf/iclr/YaoZYDSN023}, \textit{etc}).
%good reasoning rationales can improve the ability of autoregressive-based LLMs to produce correct responses to complex tasks.
Given a user query $\px$, the standard procedure to sample the response $\py$ is to leverage an autoregressive pretrained LLMs $\pi_\theta$ (parameterized by $\theta$): $\py \sim \pi_\theta (\cdot ~|~ \px)$.
As for prompt-based reasoning techniques such as chain-of-thought \citep{DBLP:conf/nips/Wei0SBIXCLZ22},
the LLM $\pi_\theta$ is firstly asked to generate thoughts (\textit{a.k.a} reasoning rationale) before generating the answers to the response:
\begin{align*}
    \px^\prime := \texttt{Reason}(\px) &= \px \oplus \texttt{Prompt}~\texttt{Template}~\texttt{of}~\texttt{Thought},\\
    ~~~ \pz \sim \pi_\theta &(\cdot ~| ~\px^\prime) ,~~~~\py \sim \pi_\theta (\cdot ~| ~\px^\prime \oplus \pz)\,,
\end{align*}
where $\pz$ is the thought or the reasoning rationale path, $\oplus$ indicates the concatenate operator, and the prompt template of the thought can be some hint prompt such as ``\texttt{Let's think s
tep by step}'' \footnote{We omit the difference between $x^\prime$ and $x$ for convenience in the latter notation.}.Empirically, people observe that there is a higher chance for the LLM $\pi_\theta$ to generate the desired answer $\py$ following the above procedure than directly sampling the response $\py \sim \pi_\theta(\cdot ~| ~\px)$.  
From a statistical perspective, we hypothesize that good reasoning rationales can significantly improve the probability of generating good answers $\py$: $~\exists~ \pz, ~~\textit{s.t.}~ \pi_\theta(\py |~\px \oplus \pz) \gg \pi_\theta (\py | ~\px)$.
\begin{wrapfigure}{r}{0.40\textwidth}
    \centering
    \includegraphics[width=\linewidth]{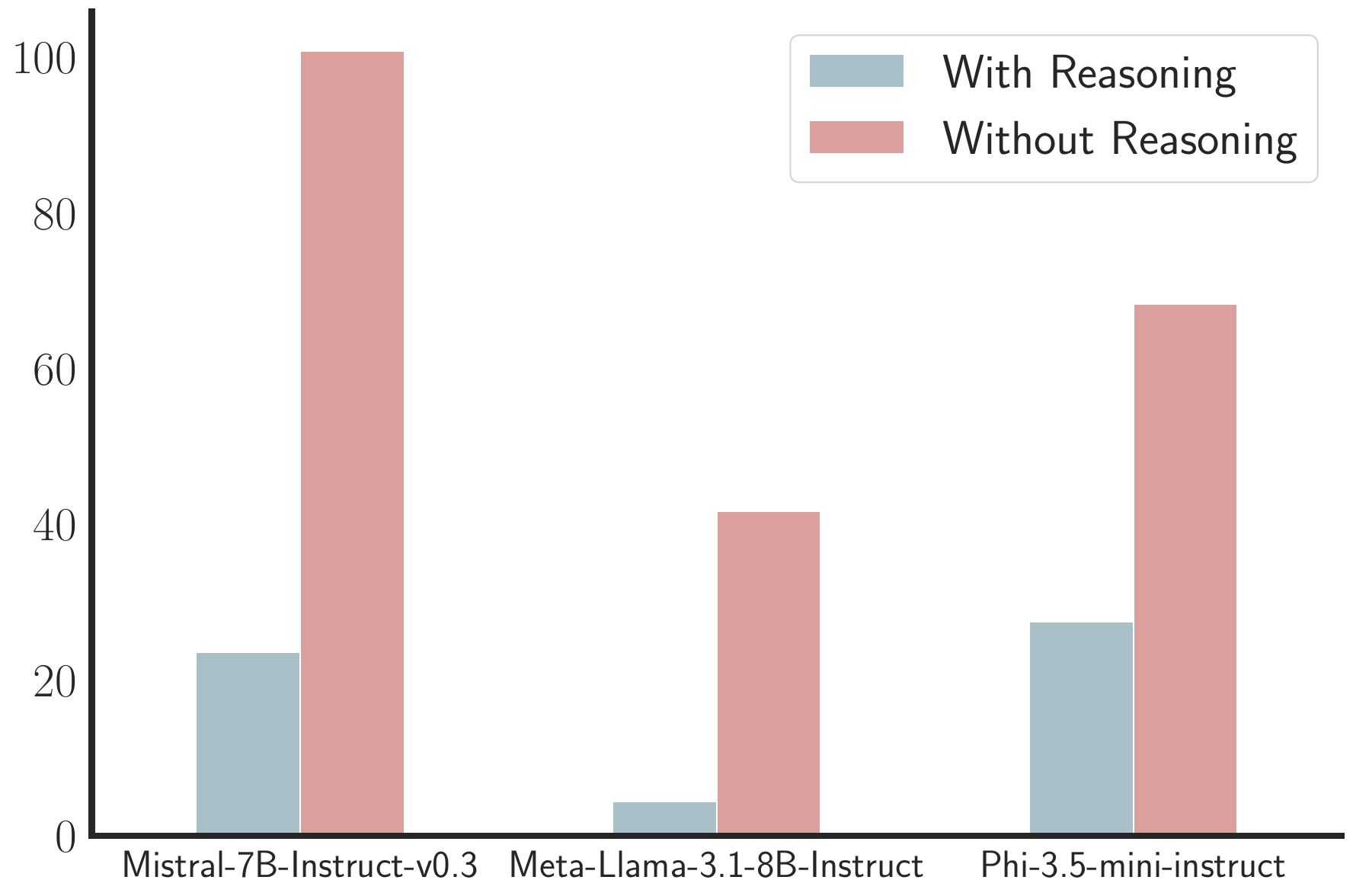}
    \vspace{-1em}
    \caption{\footnotesize{Average negative log probabilities of  LLMs to generate correct responses.}}
    %The red bar indicates the mean negative log probability that a model correctly answers after only the question, whereas the gray bar indicates the mean negative log probability after the model sees both the question and the golden rationale (lower values imply higher likelihood to answer correctly). \yy{@haolin, add more explanations here, and what's y axis.} consider make this a minipage or a wrap figure}
    \label{fig:logprob_comparison}
    \vspace{-3em}
\end{wrapfigure}

To validate the hypothesis, we check the probability of the correct answers $\py$ on pretrained LLMs with or without reasoning rationales.
In \Cref{fig:logprob_comparison},
we visualize the negative log probability of the correct answers on three different LLMs on GSM8K dataset \citep{cobbe2021training}.
We have observed that when the LLMs are conditioned on the reasoning rationales, the probability of the correct answer is remarkably larger than without reasoning rationales.
This suggests that good reasoning rationales help the LLMs generate desired answers for complex tasks by increasing their probability, giving them a higher chance of generating the correct answers. 

The above observation inspires us that we can potentially further improve the reasoning ability of existing LLMs. 
One may find some surrogate objective to enhance the quality of the reasoning rationales or improve the ability of LLMs to leverage good reasoning rationales. 
In the following \Cref{prop:cot-sc},
we show that \textit{Self-Consistency} Chain-of-Thought (CoT-SC) \citep{DBLP:conf/iclr/0002WSLCNCZ23}, which takes a majority vote of multiple reasoning rationales to improve the reasoning ability,
approximates some surrogate objective.
%the popular inference-time methods CoT-SC \citep{DBLP:conf/iclr/0002WSLCNCZ23} or major voting.

%\yy{todo: 1) add explanation of cot-sc; 2) add motivation of moving testing / inference to training. }

%Besides simple COT \citep{DBLP:conf/nips/Wei0SBIXCLZ22} which directly leverage, 
\begin{proposition}\label{prop:cot-sc}
    Denote the user query, model response, and reasoning rationale by $\px, \py,\pz$, respectively. 
    The distribution of the majority vote answer of the $K$ reasoning rationales obtained by CoT-SC approximates $p_{M}(\py | \px):= \E_{\pz\sim\llm(\cdot|~\px)}[\llm(\py|~\px\oplus \pz)]$, as $K \rightarrow \infty$.
    %When we use CoT to obtain reasoning rationales, 
    %the distribution of the 
    %When $\reason$ is CoT, $\E_{z\sim\llm(\cdot|x)}\llm(\cdot|x\oplus z)$ is the limiting distribution of CoT-SC as the number of sampled trajectories $K\rightarrow\infty$.
\end{proposition}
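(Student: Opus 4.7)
The plan is to unpack the CoT-SC sampling procedure, identify the marginal law of a single sampled answer, and then appeal to the law of large numbers to connect the majority-vote mechanism to $p_M(\py \mid \px)$. Concretely, CoT-SC draws $K$ independent pairs $(\pz_k, \py_k)$ by the hierarchical scheme $\pz_k \sim \llm(\cdot \mid \px)$ followed by $\py_k \sim \llm(\cdot \mid \px \oplus \pz_k)$, and then outputs $\hat\py_K := \arg\max_{\py} \tfrac{1}{K}\sum_{k=1}^K \mathbf{1}[\py_k = \py]$. The point I want to emphasize is that the rationales $\pz_k$ are nuisance variables that get marginalized out.

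First I would compute the marginal distribution of a single $\py_k$ by integrating (or summing) over $\pz_k$:
\begin{equation*}
\Pr(\py_k = \py \mid \px) \;=\; \E_{\pz_k \sim \llm(\cdot \mid \px)}\bigl[\llm(\py \mid \px \oplus \pz_k)\bigr] \;=\; p_M(\py \mid \px).
\end{equation*}
Since the $(\pz_k, \py_k)$ are mutually independent by construction, the answers $\py_1, \ldots, \py_K$ are i.i.d.\ draws from $p_M(\cdot \mid \px)$. Next I would invoke the (Glivenko–Cantelli / strong) law of large numbers: the empirical distribution $\hat p_K(\py \mid \px) := \tfrac{1}{K}\sum_k \mathbf{1}[\py_k = \py]$ converges almost surely to $p_M(\py \mid \px)$ for every $\py$, and uniformly over the (at most countable) response space. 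This is the sense in which the voting distribution produced by CoT-SC approximates $p_M$.

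Finally, to pass from the empirical distribution to the majority-vote selector, I would use a standard argmax-continuity argument: on the event that $p_M(\cdot \mid \px)$ has a unique mode $\py^\star$, the uniform convergence $\hat p_K \to p_M$ implies $\hat\py_K \to \py^\star$ almost surely, so CoT-SC is consistent with selecting the mode of $p_M$. If there are ties in the mode, one can restrict to a vanishing-probability event or break ties arbitrarily; this is a minor technicality. The main obstacle, and the only real subtlety, is clarifying the statement: the word "distribution" in the proposition is most naturally read as the empirical vote distribution $\hat p_K$ rather than the law of the single winning token, because the latter degenerates to a point mass. Everything else is a routine Monte Carlo consistency argument.
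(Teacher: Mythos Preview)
Your proposal is correct and follows essentially the same law-of-large-numbers argument as the paper: both show that the empirical vote frequencies $\tfrac{1}{K}\sum_k \mathbf{1}[\py_k=\py]$ converge to $p_M(\py\mid\px)$ as $K\to\infty$. Your presentation is arguably cleaner---you marginalize out $\pz$ first so that the $\py_k$ are i.i.d.\ from $p_M$ and then apply the LLN directly, and you make explicit the argmax-continuity step needed to pass from the empirical distribution to the majority-vote selector---whereas the paper conditions on the sampled rationales, takes expectation over the answers to get $\tfrac{1}{K}\sum_i \llm(\py\mid\px\oplus\pz_i)$, and then lets that sample average converge.
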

\begin{proof}
    Given a user query $\px$, CoT-SC essentially follows the procedure: 1) Sample $K$ \textit{i.i.d} reasoning rationales together with model responses:
    $(\pz_k, \py_k) \sim \pi_\theta (\cdot | \px),~1 \leq k \leq K$.
    2) Take the majority vote of $(\py_1,\ldots, \py_K)$.
    For a specific response $\py$, its frequency can be calculated as $F(\py):= \frac{1}{K}\sum^K_{k=1} \mathbbm{1} \{\py_k = \py\} $, where $\mathbbm{1}$ is the indicator function. Then the expectation of $F(\py)$ is 
   % Self-consistency essentially does the following: 1. Sample $K$ i.i.d. trajectories of (rationale, response): $(z_1,y_1),\ldots, (z_K, y_K)$ from $\llm(\cdot|x)$; 2. Take the majority vote of $y_i$s. Then for a specific response $y$, we have its frequency $F(y):= \frac{1}{K}\sum^K_{i=1} \mathbbm{1} \{y_i = y\} $, where $\mathbbm{1}$ is the indicator function. Then, the expectation of this frequency becomes
    \begin{equation*}
        \begin{split}
            \E_{\py_1,\ldots,\py_K} F(\py) &= \frac{1}{K}\sum^K_{k=1} \E_{\py_i} \mathbbm{1} \{\py_i = y\} = \frac{1}{K}\sum^K_{i=1} \prob_{\py_i\sim \llm(\cdot |\px\oplus \pz_i)}[\py_i = \py] \\
            &= \frac{1}{K}\sum^K_{i=1} \llm(\py| \px\oplus \pz_i) \overset{K\rightarrow\infty}{\longrightarrow} \E_{\pz\sim \llm(\cdot|\px)}\llm(\py|\px\oplus \pz).
        \end{split}
    \end{equation*}
\end{proof}
CoT-SC essentially leverages $p_{M}(\py | \px):= \int\llm(\pz|\px)\llm(\cdot|~\px\oplus \pz)d\pz$  
to obtain reasoning rationales and produce final correct answers.
Inspired by the conclusion, 
we could leverage surrogate objectives like $\E_{\pz\sim\llm(\cdot|~\px)}[\phi(\llm(\py|~\px\oplus \pz))]$ to further enhance the reasoning ability of LLMs, where $\phi$ is some monotonic transform such as logarithm ($\log(\cdot)$).
Further, we could also optimize the parameters of LLMs to enhance the reasoning abilities of existing LLMs during the training, so we can obtain LLMs with better reasoning abilities with the same inference time budget. 
In the following sections, we introduce the idea of optimizing LLMs to improve reasoning abilities without external feedback, by proposing a principled framework containing the surrogate objective.

\section{Optimizing the reasoning process}

In this section, we describe how to optimize the reasoning rationale without external feedback. Specifically,
we introduce the objective for optimizing the reasoning rationale in \Cref{sec:latro} from a variational perspective of LLM training;
we derive the gradient estimation for the new objective in \Cref{sec:gradient}, and discuss the sampling procedure together with reward shaping in \Cref{sec:sampling_control}. 
We summarize the proposed algorithm, \methodfull\ in \Cref{alg:latro_pseudocode}, and illustrate the overall procedure in \Cref{fig:overview}.
 
\subsection{Latent Reasoning Optimization: A Variational Approach}
\label{sec:latro}
Suppose we are given a golden dataset $\train_{\text{Gold}}:=\{(\px_i, \py_i)\}_{i=1}^{N}$ consisting of $N$ query and answer pairs, 
where $(\px, \py)$ denotes the query and the answer respectively.
A standard finetuning procedure to fit the LLM $\pi_{\theta}$ to the dataset $\train_{\text{Gold}}$ is by likelihood maximization:
\begin{align}
    \max_{\theta} \E_{(\px, \py)\sim \train_{\text{Gold}}}\left[ \log \pi_\theta (\py~|~\px)\right]\,, \label{equ:standard_obj}
\end{align}
where $\theta$ are the parameters of the LLM $\pi_\theta$ to optimize. 
Based on the discussion in \Cref{sec:background}, it is more feasible to optimize $\pi_\theta$ with additional reasoning rationale path $\pz$, 
compared with standard finetuning objective in \Cref{equ:standard_obj}.
Hence, we can introduce another ``reasoner'' $q(\pz | \px)$ to sample the latent reasoning rationales that can help the optimization procedure of $\pi_\theta$.
This is achievable by optimizing the following lower bound:
\begin{align}
    \log \pi_\theta(\py | \px) &=  \log \int \pi_\theta(\py~|~\px \oplus \pz) \pi_0(\pz ~| ~\px) d\pz \notag \\ %\, ~~\text{//~~\textcolor{red}{sampling objective of self-consistency cot}}\\
    & = \log \int \pi_\theta(\py~|~\px \oplus \pz) \frac{q(\pz | \px)}{q(\pz | \px)}\pi_{0}(\pz | \px)d\pz \notag \\  
    & \geq \max_{q(\pz | \px)} \E_{q(\pz | \px)}\big[\log \pi_{\theta}(\py | ~\px\oplus \pz)\big]  - \KL[q(\pz |\px)|| \pi_0(\pz | \px)]\,, \label{equ:lower_bound}
    %& = \underbrace{\E_{q_{\phi}(z | x)}\big[\log p_{\theta}(y | x\oplus z)\big]}_{\text{reward of $z$}} - \underbrace{\KL[q_\phi (z |x)|| p_0(z | x)]}_{\text{regularization }}\,. \tageq 
\end{align}
where $\pi_0$ is a prior reference LLM that regularizes the ``reasoner'' $q(\pz | \px)$, and the lower bound is achieved via Jensen's inequality \citep{higgins2017beta}. 
Based on the literature of variational Bayes~\citep{kingma2013auto}, one can either learn and optimize $q(\pz|\px)$ via variational Expectation Maximization (EM)~\citep{abdolmaleki2018maximum,liu2022constrained}, 
or introduce another parameterized LLM $q_{\phi}(\pz|\px)$ and optimize $\phi$ to amortize the cost.
Additionally, from the discussion in \Cref{sec:background}, we know $\pi_\theta$ itself can also serve as a naive ``reasoner'', since $\pi_\theta$ is an autoregressive LLM.

To simplify the learning procedure, we propose to use $\pi_\theta$ as the ``reasoner'' $q(\pz | \px)$. As a result, 
we can jointly learn one single LLM $\pi_\theta$, that is capable of generating good reasoning rationale together with providing correct answers given the query and its own generated reasoning rationale. To be more specific, we can define the learning objective as follows:
\begin{align}
    \max_{\theta} J(\theta) :=  \E_{(\px, \py)\sim \train_{\text{Gold}}} \bigg[\E_{\pz \sim \pi_{\theta}(\cdot | \px)}\big[\underbrace{\log \pi_{\theta}(\py | ~\px\oplus \pz)}_{R_\theta(\pz, \py, \px)}\big]  - \KL[\pi_\theta(\pz |\px)|| \pi_0(\pz | \px)]  \bigg]\,, \label{equ:our_obj}
\end{align}
where we specify the reference LLM $\pi_{0}$ to be the original $\pi_{\theta}$ before the optimization. 
Furthermore, $\log \pi_{\theta}(\py | ~\px\oplus \pz)$ in \Cref{equ:our_obj} can be viewed as the reward function $R_\theta(\pz, \py, \px)$ to evaluate the quality of the rationale $\pz $ given the pair $(\px, \py)$, since the reasoning rationale $\pz$ with higher likelihood $\log \pi_{\theta}(\py | ~\px\oplus \pz)$ indicates that it would provide a higher probability for the model to answer the question correctly. 

\textbf{Remark}~~~ By substituting $\log \pi_{\theta}(\py | ~\px\oplus \pz)$ with $R_\theta(\pz, \py, \px)$, \Cref{equ:our_obj} exactly recovers the standard optimization objective defined in offline RL \citep{levine2020offline}, RLHF~\citep{ouyang2022training,rafailov2024direct} literature. 
Though \Cref{equ:our_obj} unifies the learning procedure of the ``reasoner'' $\pi_{\theta}(\pz | \px)$ and the ``reward'' function $R_\theta(\pz, \py, \px):= \log \pi_{\theta}(\py | ~\px\oplus \pz)$, we can break down these two procedures to analyze them separately.
When we fix  $R_\theta(\pz, \py, \px)$ and optimize the ``reasoner'' $\pi_{\theta}(\pz | \px)$, 
the procedure can be interpreted as \emph{self-improvement} learning, where we improve $\pi_{\theta}(\pz | \px)$ on self-generated synthetic reasoning rationale. 
When we fix $\pi_{\theta}(\pz | \px)$ and optimize $R_\theta(\pz, \py, \px)$, the procedure can be interpreted as \emph{self-reward} learning, where we learn the self-reward function $\log \pi_{\theta}(\py | ~\px\oplus \pz)$.
The procedure can also be considered finetuning optimization given the learned reasoning rationale and query. 
Fortunately, we can naturally enjoy the benefits of these two self-learning procedures with the new reasoning finetuning objective.

\subsection{Gradient estimation for LaTRO}
\label{sec:gradient}
From previous RL literature, we know that estimating $\nabla_{\theta} J(\theta)$ in \Cref{equ:our_obj} involves the use of policy gradient methods, 
which usually suffers from high variances with the naive REINFORCE estimators~\citep{williams1992simple}. Inspired by the recent work on policy gradient for LLMs~\citep{ahmadian2024back}, 
we also leverage the REINFORCE Leave-One-Out (RLOO)~\citep{DBLP:conf/iclr/KoolHW19a} to optimize the ``reasoner'' $\pi_\theta(\pz |\px)$, 
where we can achieve lower variances of gradient estimation by sampling multiple rationales. 
We summarize the empirical gradient estimation for solving LaTRO in \Cref{prop:ge}.

\begin{proposition}(LaTRO Gradient Estimation) \label{prop:ge}
    Suppose we are given a set of training data $\train_{\text{Gold}}:= \{\px_i, \py_i\}_{i=1}^{N}$, we sample $K$ \textit{i.i.d} reasoning rationales $\pz_1^{(i)}, \pz_2^{(i)}, \ldots, \pz_K^{(i)} \sim \pi_\theta(\cdot | \px_i)$ for each query and answer pair $(\px_i, \py_i)$. The empirical gradient estimator for $\nabla_{\theta} J(\theta)$ is expressed as 
    \begin{align}
       &~~~~~~~~\nabla_{\theta} \widehat{J}(\theta) := \frac{1}{NK}\sum_{i=1}^{N}\sum_{k=1}^{K}\bigg( \nabla_\theta \log \pi_{\theta}(\pz_k^{(i)} ~|~ \px_i)\cdot A_k^{(i)} + \nabla_\theta \log \pi_\theta (\py_i ~|~ \px_i \oplus \pz_k^{(i)} )  \bigg)\,, \label{equ:gradient} \\ 
       &\text{with} ~A_k^{(i)} = r(\pz_k^{(i)}) - \frac{1}{K-1}\sum_{j \neq k}^{K} r(\pz_j^{(i)})\,, r(\pz_k^{(i)}) := \log \pi_\theta(\py_i~|~\px_i \oplus \pz_{k}^{(i)}) - \beta \log \frac{\pi_{\theta}(\pz_k^{(i)} ~|~\px_i )}{\pi_{0}(\pz_k^{(i)} ~|~ \px_i)} \,,\notag 
    \end{align}
where $\beta \geq 0$ is the coefficient to control the KL penalty. The proof can be found in \Cref{app:proof1}. 
\end{proposition}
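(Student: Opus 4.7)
The plan is to derive the gradient of the population objective in \Cref{equ:our_obj} exactly, then replace population expectations by Monte-Carlo averages and introduce a leave-one-out control variate. First I would fold the KL penalty into the inner expectation by writing
\begin{equation*}
J(\theta) = \E_{(\px,\py)\sim\train_{\text{Gold}}}\E_{\pz\sim\pi_\theta(\cdot|\px)}\!\left[\log\pi_\theta(\py|\px\oplus\pz) - \beta\log\tfrac{\pi_\theta(\pz|\px)}{\pi_0(\pz|\px)}\right],
\end{equation*}
so that the bracketed quantity is exactly the per-sample reward $r(\pz)$ defined in the statement. This reformulation makes both sources of $\theta$-dependence explicit: one through the sampling distribution $\pi_\theta(\pz|\px)$, and one through $r$ itself.

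Next, I would apply the standard log-derivative/REINFORCE identity to the outer expectation and the chain rule to the $\theta$-dependence inside $r$:
\begin{equation*}
\nabla_\theta \E_{\pz\sim\pi_\theta(\cdot|\px)}[r(\pz)] = \E_{\pz}\bigl[\nabla_\theta\log\pi_\theta(\pz|\px)\cdot r(\pz)\bigr] + \E_{\pz}\bigl[\nabla_\theta r(\pz)\bigr].
\end{equation*}
Expanding $\nabla_\theta r(\pz) = \nabla_\theta\log\pi_\theta(\py|\px\oplus\pz) - \beta\,\nabla_\theta\log\pi_\theta(\pz|\px)$ and invoking the score-function identity $\E_{\pz\sim\pi_\theta}[\nabla_\theta\log\pi_\theta(\pz|\px)] = 0$ kills the KL-gradient piece inside the second expectation, leaving only the supervised-style term $\E_{\pz}[\nabla_\theta\log\pi_\theta(\py|\px\oplus\pz)]$. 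This is exactly the second summand in \Cref{equ:gradient}.

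Finally, I would introduce the RLOO baseline. Drawing $K$ i.i.d.\ rationales $\pz_1^{(i)},\ldots,\pz_K^{(i)}\sim\pi_\theta(\cdot|\px_i)$, the empirical estimator of the first term is $\frac{1}{K}\sum_k \nabla_\theta\log\pi_\theta(\pz_k^{(i)}|\px_i)\cdot r(\pz_k^{(i)})$. Subtracting the leave-one-out average $b_k := \frac{1}{K-1}\sum_{j\neq k} r(\pz_j^{(i)})$ preserves unbiasedness because $b_k$ is independent of $\pz_k^{(i)}$ conditional on $\px_i$, so $\E[\nabla_\theta\log\pi_\theta(\pz_k^{(i)}|\px_i)\cdot b_k]=b_k\cdot\E[\nabla_\theta\log\pi_\theta(\pz_k^{(i)}|\px_i)]=0$. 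Replacing $r(\pz_k^{(i)})$ with the advantage $A_k^{(i)} = r(\pz_k^{(i)}) - b_k$ yields exactly \Cref{equ:gradient}, while reducing variance.

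The main subtlety is the careful accounting of $\theta$-dependence in the self-reward: because $r$ itself contains $\log\pi_\theta$, a naive application of REINFORCE would produce a spurious $-\beta\,\E[\nabla_\theta\log\pi_\theta(\pz|\px)]$ term, which must be shown to vanish by the score-function identity rather than simply dropped. Once this is handled, the RLOO step is a standard variance-reduction argument and the remaining algebra is routine.
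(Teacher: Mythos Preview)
Your proposal is correct and follows essentially the same route as the paper: fold the KL into the per-sample reward $r(\pz)$, differentiate under the expectation via the log-derivative trick plus the chain rule on the $\theta$-dependence inside $r$, invoke the score-function identity $\E_{\pz\sim\pi_\theta}[\nabla_\theta\log\pi_\theta(\pz|\px)]=0$ to eliminate the $-\beta\nabla_\theta\log\pi_\theta(\pz|\px)$ term, and then pass to empirical samples with the RLOO baseline. If anything, your treatment of the RLOO step is slightly more explicit than the paper's, which simply states the substitution without arguing unbiasedness.
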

The first gradient term in \Cref{equ:gradient} serves as policy gradient to improve
the ability of the LLM $\pi_\theta$ to generate high-quality reasoning rationales,
and $\log \pi_\theta (\py | \px \oplus \pz)$ can be viewed as the evaluator for reasoning rationale, which is further used to calculate the advantages. 
The second gradient term in \Cref{equ:gradient},
which is the gradient of supervised finetuning loss, essentially helps the LLM $\pi_\theta$ to leverage the reasoning rationales to produce correct answers.

\subsection{Practical Considerations}
\label{sec:sampling_control}
To reduce computation overhead and better control the sampling of reasoning rationales during training, we limit their maximum token length to $L$. The rationale ends either at the \texttt{EOS} token or at the start of a predefined answer template (e.g., "The answer is"). We then use the truncated rationale $\pz$, along with the query $\px$ and the answer $\pz$, for further computation.

We also encourage the LLM to finish its reasoning process with $L$ tokens. 
%given the limit of $L$ tokens, 
Inspired by the implementation of the RLOO trainer in the TRL library~\citep{vonwerra2022trl} , we introduce a constant penalty for rationales truncated by the maximum token length $L$. This penalty encourages the generation of rationales that fit within the specified token limit.

\begin{algorithm}[t]
    \caption{\methodfull}
    \label{alg:latro_pseudocode}
    \KwIn{Language model $\llm$, learning rate $\eta$, KL penalty factor $\beta$, MC sample size $K$, maximum generation length $L$, sample temperature $T$, number of epochs $M$, training dataset $\train_{\text{Gold}}$.}
    \KwOut{An optimized language model $\llm$.}
    \hrulefill\\
    \Fn{$\mathtt{generate}(\pi, \px, K, L, T)$}{
        Given an autoregressive language model $\pi$, input $x$, sample $K$ sequences of length $L$ from\\the distribution $\pi(\cdot | \px)$ at temperature $T$.
        
        \Return $K$ sampled sequences
    }
    \vspace{0.1cm}
    % \Fn{$\mathtt{truncate}(\px, t)$}{
    %     Given the input sequence $\px$, truncation pattern $t$, truncate $\px$ at the first occurence of $t$.
        
    %     \Return truncated sequence
    % }
    % \vspace{0.1cm}
    Intialize reference language model $\pi_0$ as $\llm$

    \For{epoch \KwTo $\Range(M)$}{
        \For{$\px_i, \py_i$ \KwTo $\train_{\text{Gold}}$}{
            $\pz_1^{(i)},\ldots,\pz_K^{(i)} \gets \Generate(\llm, \px_i, K, L, T)$ \\
            % $\pz_1^{(i)},\ldots,\pz_K^{(i)} \gets \Truncate(\pz_i^{(j)}, t)$ \textbf{for} $k$ \KwTo $\Range(K)$ \\
            % $r(z_1), \ldots, r(z_K) \gets -\log p_\theta(y|x \oplus z_i) + \beta \log \frac{q_\phi(z_i|x)}{p_0(z_i|x)}$ \\
            % \For{$i$ \KwTo $\Range(K)$}{
            %     \If{$z_i$ \textnormal{is not finished}}{
            %         $r(z_i) \gets \gamma \frac{1}{K} \sum^K_{j=1} r(z_j)$
            %     }
            % }
            % $A_1, \ldots, A_K \gets r(z_i) - \frac{1}{K-1} \sum^K_{j=1}r(z_j)$ \\
            % $\hat{\nabla}_{\phi} \Ls^{PG}_{\phi} \gets  -\frac{1}{K} \sum^K_{i=1} A_i \cdot \nabla_{\phi} \log q_\phi(z_i| x)$ \\
            % $\hat{\nabla}_\theta\Ls^{MLE}_{\theta}\gets -\frac{1}{K} \sum^K_{i=1} \nabla_\theta \log p_\theta (y|x\oplus z_i)$ \\
            % $\phi \gets \phi + \eta \hat{\nabla}_{\phi} \Ls^{PG}_{\phi}$ \\
            % $\theta \gets \theta + \eta \hat{\nabla}_\theta\Ls^{MLE}_{\theta}$ \\
        }
        Estimate $\nabla_{\theta} \widehat{J}(\theta)$ with \Cref{prop:ge}\\
        $\theta \gets \theta + \eta \nabla_{\theta} \widehat{J}(\theta)$ \\
    }
    \Return $\llm$
\end{algorithm}

\section{Experiments}
\subsection{Setup}
We evaluate the performance of the proposed method across two datasets: a mathematical reasoning dataset (GSM8K, \citet{cobbe2021training}) and a logical reasoning dataset (ARC-Challenge, \citet{DBLP:conf/naacl/TalmorHLB19}). The sizes of the datasets are listed in \Cref{tab:datasets}.

\begin{wraptable}{r}{0.47\textwidth}
    \caption{Size of the datasets}
    \vspace{-1em}
    \label{tab:datasets}
    \begin{center}        
    \def\arraystretch{1}
    \begin{tabular}{c|cc}
        \toprule
        \bf Name & \bf Training & \bf Evaluation \\
        \hline
        GSM8K & 7473 & 1319 \\
        ARC-Challenge & 1119 & 1172 \\
        \bottomrule
    \end{tabular}
    \vspace{-1em}
    \end{center}
\end{wraptable}

\paragraph{Training.}
For each dataset, we fine-tune three base models: Phi-3.5-mini-instruct~\citep{DBLP:journals/corr/abs-2404-14219}, Mistral-7B-Instruct-v0.3~\citep{DBLP:journals/corr/abs-2310-06825}, and Meta-Llama-3.1-8B-Instruct~\citep{DBLP:journals/corr/abs-2407-21783}, abbreviated as Phi-3.5, Mistral-7B, and Llama-3.1-8B, respectively. We provide two baseline comparisons: the base model and the supervised fine-tuned (SFT) model. For GSM8K, LaTRO fine-tuning excludes golden rationales from the solutions in the training set, while the SFT model is trained using golden rationales. For ARC-Challenge, as suggested in~\citep{zheng2024large}, the model is trained to generate answers to the text of multiple-choice questions rather than selecting labels. Since no golden rationales are available for ARC-Challenge, the SFT model is trained to directly generate answers.

\paragraph{Evaluation.}
For GSM8K, we evaluate all models with CoT prompting, and for ARC-Challenge, we evaluate the SFT baseline with direct answer generation, while the base model and the LaTRO fine-tuned model with CoT prompting. All evaluations are conducted with zero-shot prompts. We report both greedy decoding (GD) results and self-consistency (with temperature $T=1$) results. We choose a self-consistency sample size $k=8$ (maj@8) in \Cref{tab:main_experiment} after observing that more than 8 samples did not bring further performance improvement (see \Cref{fig:ablation}~(b) for details).

\paragraph{Implementation.} LaTRO is implemented on the high level as in \Cref{alg:latro_pseudocode}, with additional engineering techniques as discussed in \cref{sec:sampling_control}. LaTRO is implemented using the widely recognized transformers \citep{wolf-etal-2020-transformers} and TRL \citep{vonwerra2022trl} libraries, with PyTorch \citep{Ansel2024PyTorch2F} as backend. DeepSpeed ZeRO \citep{DBLP:conf/kdd/RasleyRRH20} is used in stage 3, along with Flash Attention 2 \citep{DBLP:conf/nips/DaoFERR22} to enhance training efficiency. The models were trained on a machine 
 equipped with 8xH100 80GB GPUs, using bfloat16 precision. %Hyperparameter selections are detailed as follows.
\paragraph{Hyperparameters.} AdamW optimizer with a learning rate of $5\times 10^{-7}$, no warm-up steps, and a linear decay strategy is used. The Monte Carlo (MC) sample size $K=16$ and the batch size of the data loader $3$ are predetermined, resulting in an effective batch size of $48$. Gradient accumulation steps and training batch size are subsequently adjusted to prevent out-of-memory errors during training.
A temperature of $T=1$ is used for MC sampling, and a penalty factor $\gamma=2$ is applied for incomplete rationales. The KL penalty is set at $\beta=0.05$ for GSM8K and $0.25$ for ARC-Challenge. Except for the results presented in \Cref{sec:ablation}, the maximum generation length is maintained at $L=500$. We train all models up to six epochs for GSM8K, and 12 epochs for ARC-Challenge. The checkpoint with best test accuracy is chosen.

For the SFT baseline experiments, we use a batch size of $32$ and adjust the learning rate to ensure that the evaluation loss decreases and finally converges. All SFT baselines are trained for a maximum of 12 epochs. The checkpoint with the best test accuracy is selected. 

In addition to the main quantitative results, we conduct ablation studies on two factors: 1. The maximum generation length $L$, where we study the effects of tuning $L$ in both training and inference times; 2. The self-consistency samples $k$, where we explore to what extent LaTRO can still benefit from inference-time scaling.

The main quantitative results, qualitative analysis of sample responses, and results of the ablation study are presented in \Cref{sec:results,sec:analysis,,sec:ablation}, respectively. Additional details on our prompt templates and more samples can be found in \Cref{sec:templates,sec:sample}.

\subsection{Results}
\label{sec:results}

In this subsection, we present evaluation results that demonstrate how effectively LaTRO enhances the reasoning abilities of LLMs on downstream datasets. The detailed results are provided in\Cref{tab:main_experiment}.

For the GSM8K dataset, LaTRO fine-tuned models outperform all base models by up to $19.5\%$ (Mistral-7B, $47.8\%\rightarrow 67.3\%$) and show an average improvement of $12.5\%$ across the three models examined with greedy decoding. The greatest improvement margin is observed for Mistral-7B, while the smallest is seen for Llama-3.1-8B, consistent with our initial findings in \Cref{fig:logprob_comparison}, where Mistral-7B exhibited the lowest log probability for directly answering questions and Llama-3.1-8B exhibited the highest. With self-consistency, the improvements are by up to $16.5\%$ (Phi-3.5, $74.0\% \rightarrow 90.5\%$) and the average improvement is $13.1\%$. Furthermore, LaTRO models demonstrate superior performance relative to SFT baselines, with an average improvement of $9.6\%$ for greedy decoding and $13.2\%$ for self-consistency. It is worth noting that for the SFT baseline of Llama-3.1-8B, overfitting on the test set is still observed after tuning the learning rate.

For ARC-Challenge, LaTRO fine-tuned models still outperform the baselines, though with a smaller margin. When using greedy decoding, the improvements over the base models are up to $1.6\%$ with an average increase of $1\%$. We see more increment with self-consistency, where the improvement margins are on average $2.4\%$. Comparing to SFT baslines, we find that all three models are very sensitive when fine-tuning to directly generate the answer of ARC-Challenge questions. They perform even inferior to the unoptimized base models. When using greedy decoding, the improvements of LaTRO fine-tuned models over the SFT baselines are on an average of $5.2\%$, and by up to $6\%$ (Llama-3.1-8B). In the case of self-consistency, LaTRO performs better than the base models by an average of $2.4\%$, and surpasses the SFT models by an average of $8.1\%$.
On the less surprising results compared to GSM8K, we conjecture that for ARC-Challenge, the models are already good at producing the answer either directly or through CoT prompting. Hence, further optimization of the reasoning process did not yield significant improvement.

\begin{table}[t]
    \caption{Zero-shot accuracy (\%) comparison between LaTRO and the baselines on GSM8K and ARC-Challenge datasets. The models are fine-tuned on corresponding training datasets. The base model are marked with ``N/A'' in the training method. GD stands for greedy decoding at inference time and maj@8 stands for self-consistency with 8 samples. The models are evaluated by default using CoT, except that $\dagger$ indicates the direct answer generation is applied during evaluation. }
    \label{tab:main_experiment}
    \begin{center}
    \def\arraystretch{1.3}
    \begin{tabular}{lllcc}
        \toprule
        \bf Base Model & \bf Training Method & \bf Inference Method & \bf GSM8K  & \bf ARC-Challenge \\ 
        \hline
        \multirow{6}{*}{Phi-3.5} & \multirow{2}{*}{N/A} & GD & 72.9 & 85.1 \\
        & & maj@8 & 74.0 & 86.0 \\
        & \multirow{2}{*}{SFT} & GD & 75.8 & 81.0\textsuperscript{$\dagger$} \\
        & & maj@8 & 77.1 & 80.5\textsuperscript{$\dagger$} \\
        & \multirow{2}{*}{LaTRO} & GD & \bf 87.6 & \bf 86.4 \\
        & & maj@8 & \bf 90.5 & \bf 87.5\\
        \hline
        \multirow{6}{*}{Mistral-7B} & \multirow{2}{*}{N/A} & GD & 47.8 & 74.1 \\
        & & maj@8 & 58.2 & 74.1 \\
        & \multirow{2}{*}{SFT} & GD & 57.2 & 70.0\textsuperscript{$\dagger$} \\
        & & maj@8 & 59.9 & 70.6\textsuperscript{$\dagger$} \\
        & \multirow{2}{*}{LaTRO} & GD& \bf 67.3 & \bf 74.3 \\
        & & maj@8 & \bf 73.8 & \bf 78.9 \\
        \hline
        \multirow{6}{*}{Llama-3.1-8B} & N/A & GD & 76.8 & 81.4\\
        & & maj@8 & 79.7 & 84.4 \\
        &  \multirow{2}{*}{SFT} & GD & 73.2 & 77.0\textsuperscript{$\dagger$} \\
        & & maj@8 & 74.7 & 76.4\textsuperscript{$\dagger$} \\
        & \multirow{2}{*}{LaTRO} & GD& \bf 80.1 & \bf 83.0 \\
        & & maj@8 & \bf 87.0 & \bf 85.3\\
        \bottomrule
    \end{tabular}
    \end{center}
    
\end{table}

\begin{figure}[t]
    \centering
     \begin{tabular}{cc}
     \includegraphics[width=.47\linewidth]{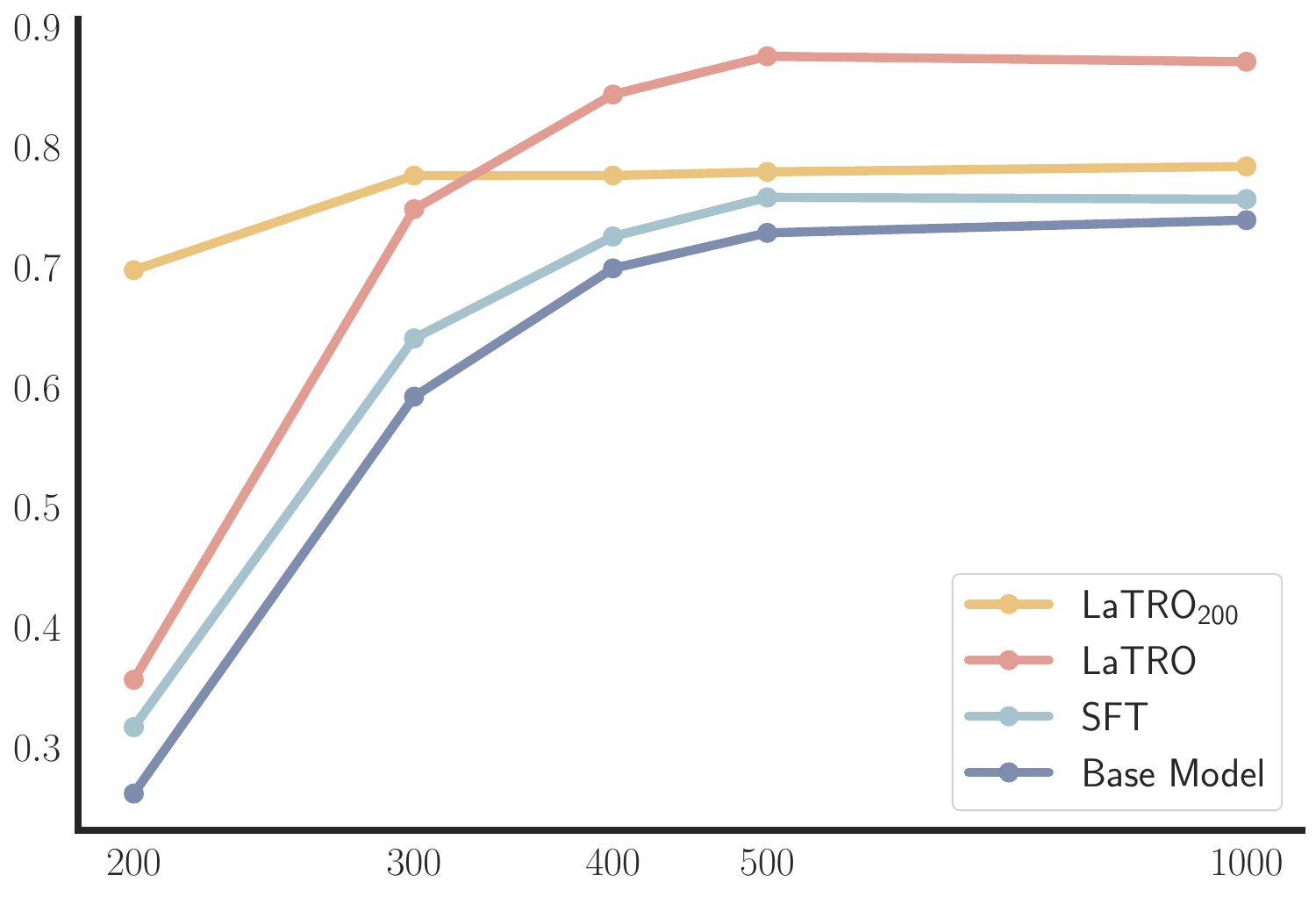} &
     \includegraphics[width=.47\linewidth]{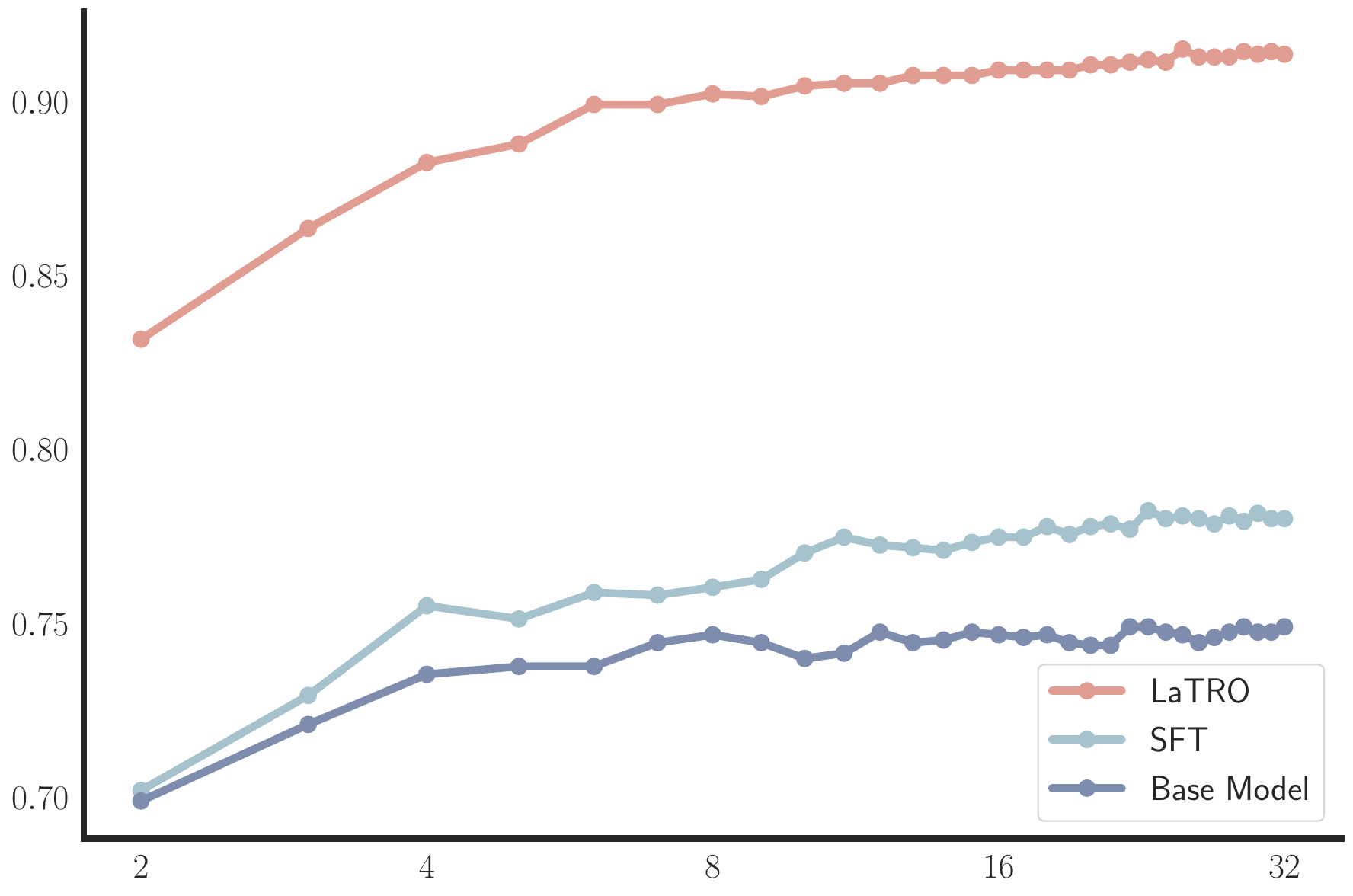} \\
     \small{(a) Zero-shot accuracy with different $L$.} & \small{(b) Zero-shot maj@$k$ accuracy with different $k$.}
     \end{tabular}
    \caption{Ablation study results on GSM8K with base model Phi-3.5. In (a), the $x$-axis represents various maximum token length $L$ of reasoning rationales, $y$-axis is the accuracy, and the plot shows the zero-shot performance v.s. various maximum token lengths for different methods. In (b), the $x$-axis represents the \# of sampled reasoning rationales, the $y$-axis is the accuracy, and the plot shows the zero-shot performance v.s. the  \# of reasoning rationales used in the majority vote.}
    \label{fig:ablation}
\end{figure}

\subsection{Ablation Study}
\label{sec:ablation}

In this subsection, we present our ablation study on the effect of different parameters in LaTRO. For consistency, we fix the base model to Phi-3.5 and the dataset to GSM8K throughout the ablation experiments.

% \subsubsection{How many tokens are enough?}
\paragraph{How many tokens are enough?} \citet{DBLP:conf/iclr/0001LZ024} demonstrated that when the input length is $n$, a transformer model with a hidden size of $O(\log n)$ can solve problems equivalent to Boolean circuits of size $m$, using $m$ CoT steps. However, the empirical determination of sufficient CoT tokens for optimal performance remains underexplored. In this section, we report zero-shot accuracy with generation length $L$ ranging from 200 to 1000 tokens at inference time. Additionally, a Phi-3.5 model is fine-tuned with $L=200$ for comparison. We distinguish two LaTRO fine-tuned models, referred to as LaTRO and $\text{LaTRO}_{200}$. As shown in~\Cref{fig:ablation}(a) accuracy gains plateau when $L\geq 500$, suggesting 500 tokens might suffice for grade school math problems. In contrast, limiting $L$ to 200 reduces accuracy, unless the model is trained accordingly. Interestingly, LaTRO significantly improves performance under this constraint by training the model to generate more concise rationales. This finding implies LaTRO can also train LLMs to condense rationales for limited inference-time compute. A sample from $\text{LaTRO}_{200}$ is provided in \Cref{sec:sample} for qualitative analysis.

\paragraph{Is inference-time scaling still beneficial?} While LaTRO shifts much of the reasoning computation to training time, it can still benefit from the inference-time scaling. To explore this, We evaluate the models again using self-consistency with the number of samples $k$ ranging from 1 to 32. As shown in~\Cref{fig:ablation} (b), with $k=8$ rationales and majority voting, LaTRO achieves 90\% accuracy on GSM8K. This indicates that LaTRO equipped with self-consistency can further enhance performance, even with LaTRO's focus on training-time optimization.

\begin{figure}[ht]
    \centering
    \begin{bluetextbox}
        \setstretch{1.1}
        \textbf{Question}: Frankie and Binkie went bowling together. Frankie's score was 15 better more than twice as high as Binkie's. If Binkie bowled a score of 90, what was Frankie's score? Let's think step by step.

        \textbf{Response (Base model)}: Let's break this problem down into steps:
        
        1. We know that the difference between Frankie's score (F) and Binkie's score (B) is 15. So we can express Frankie's score as \textcolor{red}{F = B + 15}
        
        2. We also know that Frankie's score is twice as high as Binkie's score. So we can write another equation: F = 2B
        
        3. Now, we have two equations with two unknowns. Let's solve for Frankie's score (F) by setting the two equations equal to each other:
        - 2B = B + 15
        - Multiply both sides by 1 to get rid of the fraction: 2B - B = 15
        - Simplify: B = 15 (since subtracting B from 2B results in 15)
        
        4. Now that we know Binkie's score, we can find Frankie's score using either of the equations we created earlier. Let's use the first one (F = B + 15):
        - F = 15 + 15
        - Frankie's score is 30 more than Binkie's score, which is 15 in this case. So Frankie's score is 15 + 15 = 30 + 15 = 45
        
        The final answer is that Frankie's score was 45.

        \textbf{Reponse (SFT)}: Twice as many as Binkie's 90 score is equal to 2*90= <<2*90=180>>180. Therefore, \textcolor{red}{180-(180-15)=165} was the number Frankie bowled. The answer is 165

        \textbf{Response (LaTRO)}: Here's how to solve the problem:
        1. We know that Binkie's score is 90.
        2. The problem states that Frankie's score was 15 better more than twice as high as Binkie's score. This means that Frankie's score is 15 + (2 * 90) = 15 + 180 = 195.

    \end{bluetextbox}
    \caption{Sample responses of a GSM8K question from Mistral-7B models, the errors are highlighted in red.}
    \label{fig:samples_mistral}
\end{figure}

\subsection{Case Study}
\label{sec:analysis}
 We take a closer look at the responses generated by the LaTRO fine-tuned models. 
 We select a question from GSM8K and compare the responses from the base, the SFT model, and the LaTRO finetuned model. 
 We choose the set of responses from the Mistral-7B models that we evaluated. As can be seen in \Cref{fig:samples_mistral}, the base model not only generates a lengthy response, it also makes a logical mistake at the first step, where the correct equation to establish here is ``F = 2B + 15''. The SFT model simplifies the answer and makes the first step correct. However, in the second step it first makes a wrong equation, then makes an arithmetic error when evaluating this equation. Further, LaTRO can give a concise and correct answer. We include more sample responses in \Cref{sec:sample}.

%\section{Conclusion, limitation and future work}
\label{sec:conclusion_future_work}

\section{Conclusion}
\label{sec:conclusion}

In conclusion, this work introduces LaTRO, a principled framework for optimizing language models' reasoning capabilities without external feedback or reward models. By formulating reasoning as sampling from a latent distribution and leveraging self-rewarding, LaTRO enables models to concurrently improve both their reasoning process and ability to evaluate reasoning quality. Our extensive experiments across multiple model architectures and tasks demonstrate significant performance gains, with LaTRO outperforming baseline models and supervised fine-tuning approaches. These findings suggest that pre-trained LLMs possess latent reasoning capabilities that can be unlocked through our proposed optimization approach, representing a significant step towards creating more intelligent systems that can self-evolve their problem-solving capabilities.

While LaTRO shows promising results, there are some limitations to consider. The computational cost of sampling multiple rationales during training could be prohibitive for very large models. Future work could explore ways to reduce this computational overhead, such as using more efficient sampling techniques or adaptive rationale generation. Other promising directions include investigating the applicability of LaTRO to a wider range of reasoning tasks beyond math and science problems, and exploring how to 
conduct multi-step reasoning learning to to enhance reasoning capabilities further. Despite these limitations, our contributions advance both the state-of-the-art in LLM reasoning capabilities and provide valuable insights into the nature of LLM alignment and its potential for self-improvement.

\iffalse
\paragraph{Limitation}
\label{sec:limitaion}

\paragraph{Future work}
\label{sec:future_work}

\begin{itemize}[left=4pt]
    % \item \textbf{Full Fine-tuning}: Choose both $q_\phi$ and $p_\theta$ to be the current LLM for optimization as $\pi_\theta$, then we can recover the original objective \labelcref{eqn:latro_problem}, and $p_0$ can be the previous LLM to make sure we are not optimizing too far.
    \item \textbf{Adapter Optimization}: Choose $q_\phi$ to be a lightweight adapter, e.g. LoRA \citep{DBLP:conf/iclr/HuSWALWWC22} that controls the reasoning process, or even an implicit reasoning adapter \citep{DBLP:journals/corr/abs-2311-01460}, and $p_\theta = p_0 = \llm$ to be a fixed LLM.
    \item \textbf{Sampling Control}: Choose $q = p_\theta = p_0 = \llm$ to be the same, fixed LLM while $\phi$ denotes the parameters of an inference-time algorithm to control the sampling process.
    \item \textbf{Prompt Optimization}: Fix all $q_\phi = p_\theta = p_0 = \llm$ while optimizing the prompt $\reason$.
    \item \textbf{Teacher Model} $p$: since $p$ can be arbitrary distribution, we can use a stronger teacher model that is better at reasoning instead of the original LLM.
\end{itemize}

\fi 

% \clearpage

\bibliography{iclr2025/iclr2025_conference}
\bibliographystyle{iclr2025/iclr2025_conference}

\clearpage
\newpage

\appendix
\section{Additional Details on Our Theoretical Framework}
\subsection{Proof of \Cref{prop:ge}}
\label{app:proof1}
\begin{proof}
We restate the objective as follows:
\begin{align*}
    J(\theta) &:= \E_{(\px, \py)\sim \train_{\text{Gold}}} \bigg[\E_{\pz \sim \pi_{\theta}(\cdot | \px)}\big[\log \pi_{\theta}(\py | ~\px\oplus \pz)\big]  - \beta\KL[\pi_\theta(\pz |\px)|| \pi_0(\pz | \px)]  \bigg]\,, \\
    & =\E_{(\px, \py) \sim \mathcal{D}_{\text{Gold}}}\big[\E_{\pi_\theta(\pz|\px)} [\log \pi_\theta (\py | \px\oplus \pz) -\beta \log \pi_\theta (\pz | \px) + \beta \log \pi_{0}(\pz |\px)] \,\big]\,,
\end{align*}
where $\beta > 0$ is a positive coefficient to control the regularization strength.  
We take the gradient \textit{w.r.t} $\theta$ at each sample pair $(\px, \py)$, and we get 
\begin{align*}
    \nabla_\theta J(\theta; \px, \py) &:= \nabla_\theta \int (\pi_\theta(\pz | \px))( \log \pi_\theta (\py | \px\oplus \pz) -\beta \log \pi_\theta (\pz | \px) + \beta\log \pi_{0}(\pz |\px) )d\pz \\
    &= \E_{\pi_\theta(\pz|\px)}\left[\nabla_\theta \log \pi_\theta (\pz | \px)\left(\log \pi_\theta(\py | \px \oplus \pz) - \beta \log \frac{\pi_\theta(\pz | \px)}{\pi_{0}(\pz | \px)}\right)\right] \\
    &~~~~~~~~ + \E_{\pi_\theta(\pz |\px)}\left[\nabla_\theta \log \pi_\theta (\py | \px \oplus \pz) - \beta \nabla_\theta\log\pi_\theta (\pz |\px)\right]\,.
\end{align*}
We further define $r(\pz) := \log \pi_\theta (\py | \px \oplus \pz) - \beta \log \frac{\pi_{\theta}(\pz | \px)}{\pi_{0}(\pz | \px)}$, and use the fact that $\E_{\pi_\theta(\pz |\px)}[\nabla_\theta \log \pi_\theta(\pz | \px)] = \int \pi_\theta(\pz | \px)\frac{\nabla_\theta \pi_\theta(\pz | \px)}{\pi_\theta(\pz | \px)}d\pz = \nabla_\theta\int \pi_\theta(\pz | \px)d\pz = 0$. we obtain the final gradient as
\begin{align*}
    \nabla_\theta J(\theta; \px, \py) = \E_{\pi(\pz | \px)}\left[\nabla_\theta \log\pi_\theta(\pz | \px) \cdot r(\pz) + \nabla_\theta \log \pi_\theta (\py | \px \oplus \pz)\right]\,.
\end{align*}
And when we use RLOO estimator with empirical samples, we can replace above gradient estimation with empirical samples, which gives us the following result:
\begin{align*}
       &~~~~~~~~\nabla_{\theta} \widehat{J}(\theta) := \frac{1}{NK}\sum_{i=1}^{N}\sum_{k=1}^{K}\bigg( \nabla_\theta \log \pi_{\theta}(\pz_k^{(i)} ~|~ \px_i)\cdot A_k^{(i)} + \nabla_\theta \log \pi_\theta (\py_i ~|~ \px_i \oplus \pz_k^{(i)} )  \bigg)\,, \\
       &\text{with} ~A_k^{(i)} = r(\pz_k^{(i)}) - \frac{1}{K-1}\sum_{j \neq k}^{K} r(\pz_j^{(i)})\,, r(\pz_k^{(i)}) := \log \pi_\theta(\py_i~|~\px_i \oplus \pz_{k}^{(i)}) - \beta \log \frac{\pi_{\theta}(\pz_k^{(i)} ~|~\px_i)}{\pi_{0}(\pz_k^{(i)}  ~|~\px_i)} \,,
    \end{align*}
which completes the proof.
\end{proof}

\section{Prompt Templates}
\label{sec:templates}
In this section, we list all prompt templates used in our experiments. For the GSM8K dataset, we used a minimum prompt template. For the ARC-Challenge dataset, we use a CoT template for the base model and LaTRO fine-tuning, and another template direct answer generation in SFT.
\begin{figure}[ht]
    \centering
    \begin{bluetextbox}
    \texttt{\{question\}} Let's think step by step.
    \end{bluetextbox}
    \caption{CoT template for GSM8K}
    \label{fig:gsm8k_template}
\end{figure}

\begin{figure}[ht]
    \centering
    \begin{bluetextbox}
        \setstretch{1.1}
        Question: \texttt{\{question\}}
        
        Options: \texttt{\{choices["text"]\}}
        
        Let's think step by step first then print your answer like 'The answer is (one of the options)'.""",
    \end{bluetextbox}
    \caption{CoT template for ARC-Challenge}
    \label{fig:template_arc_cot}
\end{figure}

\begin{figure}[ht]
    \centering
    \begin{bluetextbox}
    \setstretch{1.1}
    The output MUST strictly follow the structure and format described below:

    1. The answer should be strictly one of the options given in the question
    
    2. Do not include any text after you choose an option.
    
    **Example:**
    
    Question: (question here)
    
    Options: [option1, option2, option3]

    The answer is: (one of the options).

    Question: \texttt{\{question\}}
    
    Options: \texttt{\{choices["text"]\}}
    \end{bluetextbox}
    \caption{Direct answer generation template for ARC-Challenge}
    \label{fig:template_arc_direct}
\end{figure}

\newpage
\section{Sample Responses}
\label{sec:sample}

\begin{figure}[ht]
    \centering
    \begin{bluetextbox}
        \setstretch{1.1}
        \textbf{Question}: Kylar went to the store to buy glasses for his new apartment. One glass costs \$5, but every second glass costs only 60\% of the price. Kylar wants to buy 16 glasses. How much does he need to pay for them? Let's think step by step.

        \textbf{Response (Base model)}:
        
        Step 1: Determine the cost of one regular glass and one discounted glass. One regular glass costs \$5. Every second glass costs 60\% of the price of a regular glass, so: 60\% of $5 = 0.60 * $5 = \$3
        
        Step 2: Calculate the total number of regular and discounted glasses. Kylar wants to buy 16 glasses in total. Since every second glass is discounted, we can divide the total number of glasses by 2 to find the number of discounted glasses: 16 glasses / 2 = 8 discounted glasses Now, subtract the number of discounted glasses from the total number of glasses to find the number of regular glasses: 16 glasses - 8 discounted glasses = 8 regular glasses Step \textcolor{red}{(did not finish)}

        \textbf{Response ($\text{LaTRO}_{200}$)}: Let's break it down:
        
        1. Kylar needs to buy 16 glasses.
        
        2. Every second glass costs 60\% of the original price (\$5), which is \$3.
        
        3. So, for every pair of glasses (one full price and one discounted), Kylar pays \$5 + \$3 = \$8.
        
        4. Since he needs 16 glasses, he buys 8 pairs. 5. Therefore, Kylar needs to pay 8 pairs * \$8 per pair = \$64. So, Kylar needs to pay \$64 for the 16 glasses. The answer is 64.

    \end{bluetextbox}
    
    \caption{Sample responses of a GSM8K question, from Phi-3.5 models, maximum generation length $L=200$. The base model does not finish the generation, while the LaTRO model generates a short and correct answer.}
    \label{fig:sample_latro_200_phi}
\end{figure}

\begin{figure}
    \centering
    \begin{bluetextbox}
        \setstretch{1.1}
        \textbf{Question}: Water, ice, and steam all have different temperatures. What is the order from coldest to hottest?
        
        \textbf{Options}: ['ice, water, steam', 'ice, steam, water','steam, ice, water','steam, water, ice']

        \textbf{Response (Base model)}: To determine the order from coldest to hottest, let's evaluate each option.

        - Ice: It is the solid form of water and is typically around 0°C (32°F).
        
        - Water: It is the liquid form of water and is usually around 100°C (212°F) or 0°C (32°F) at standard pressure and temperature conditions, respectively.
        
        - Steam: It is the gaseous form of water and is typically at a temperature equal to or greater than 100°C (212°F) at standard pressure conditions, depending on the pressure and other factors.

        Option 1: 'ice, water, steam' - \textcolor{red}{This order seems incorrect} since steam is typically hotter than water at standard conditions.
        
        Option 2: 'ice, steam, water' - This order is incorrect because ice is the coldest, but the relative order of water and steam is opposite of what we know.
        
        Option 3:'steam, ice, water' - This order is incorrect since ice is colder than steam.
        
        Option 4:'steam, water, ice' - This order seems the most plausible, considering that steam is hotter than water and water is hotter than ice.

        The answer is 'steam, water, ice'.
        
    \textbf{Response (SFT)}: \textcolor{red}{water, ice, steam}

    \textbf{Response (LaTRO)}:
    To determine the order from coldest to hottest, let's analyze each option:
    
    1. Ice has a solid form and a fixed temperature, which is typically around 0°C (32°F) at standard pressure.
    
    2. Water is liquid and exists at a temperature of 0°C (32°F) at standard pressure, melting point of ice.
    
    3. Steam is the gaseous state of water and is typically around 100°C (212°F) at standard pressure.
    
    Based on this information, we can now order them from coldest to hottest:
    
    First, we have ice, as it is the coldest.
    Then, we have water at 0°C, which is the middle temperature.
    Lastly, we have steam, which is the hottest.
    
    So, the order from coldest to hottest is 'ice, water, steam'.
    
    The answer is ['ice, water, steam'].

        \end{bluetextbox}
    
    \caption{Sample responses of an ARC-Challenge question, from Llama-3.1 models, the errors are highlighted in red. The base model shows knowledge about basic physics but makes a logical error on the order.}
    \label{fig:sample_arc_llama}
\end{figure}

\end{document}